%%
%% Copyright 2024 OXFORD UNIVERSITY PRESS
%%
%% This file is forl of the 'oup-authoring-template Bundle' for the Royal Statistical Society.
%% ---------------------------------------------
%%
%% It may be distributed under the conditions of the LaTeX Project Public
%% License, either version 1.2 of this license or (at your option) any
%% later version.  The latest version of this license is in
%%    http://www.latex-project.org/lppl.txt
%% and version 1.2 or later is part of all distributions of LaTeX
%% version 1999/12/01 or later.
%%
%% The list of all files belonging to the 'oup-authoring-template Bundle' is
%% given in the file `manifest.txt'.
%%
%% Template article for OXFORD UNIVERSITY PRESS's document class `oup-authoring-template'
%% with bibliographic references
%%

\documentclass[numsec,webpdf,modern,medium,namedate]{oup-authoring-template}

\onecolumn

\graphicspath{{Fig/}}

% line numbers
%\usepackage[mathlines, switch]{lineno}
%\usepackage[right]{lineno}

\usepackage{natbib}
\usepackage{amsmath}
\usepackage{graphics}
\usepackage{subfig}

\usepackage{float}
\usepackage{caption}
\usepackage{fullpage}

\usepackage{newtxtext}
\usepackage[subscriptcorrection]{newtxmath}
\usepackage{epstopdf}

\theoremstyle{thmstyleone}%
\newtheorem{theorem}{Theorem}%  meant for continuous numbers
%%\newtheorem{theorem}{Theorem}[section]% meant for sectionwise numbers
%% optional argument [theorem] produces theorem numbering sequence instead of independent numbers for Proposition
%
%%\newtheorem{lemma}[theorem]{Lemma}
\newtheorem{lemma}{Lemma}
\theoremstyle{thmstyletwo}%
\newtheorem{example}{Example}%
\theoremstyle{thmstylethree}%

\usepackage[doublespacing]{setspace}
\usepackage[fontsize=12pt]{fontsize}

\begin{document}

\journaltitle{Journal XXXX}
\DOI{XXXX}
\copyrightyear{XXXX}
\pubyear{XXXX}
\access{Advance Access Publication Date: Day Month Year}
\appnotes{Original article}

\firstpage{1}

%\subtitle{Subject Section}

\title[Unfaithful Probability Distribution in Causal DAG]{Unfaithful Probability Distributions in Binary Triple of Causality Directed Acyclic Graph}

\author[1,$\ast$]{Jingwei Liu \ORCID{0009-0003-1407-1969}}

\authormark{Jingwei Liu}

\address[1]{\orgdiv{School of Mathematical Sciences}, \orgname{Beihang University},
\orgaddress{ \postcode{100191}, \state{Beijing}, \country{P.R China}}}
%%\orgaddress{\street{37 Xueyuan Road, Haidian District}, \postcode{100191}, \state{Beijing}, \country{P.R China}}}

\corresp[$\ast$]{Address for correspondence.Jingwei Liu,School of Mathematical Sciences, Beihang University, Beijing, 100191, P.R China. \href{email:liujingwei03@tsinghua.org.cn}{liujingwei03@tsinghua.org.cn}}

\received{Date}{0}{Year}
\revised{Date}{0}{Year}
\accepted{Date}{0}{Year}

%\editor{Associate Editor: Name}

%\abstract{
%\textbf{Motivation:} .\\
%\textbf{Results:} .\\
%\textbf{Availability:} .\\
%\textbf{Contact:} \href{name@email.com}{name@email.com}\\
%\textbf{Supplementary information:} Supplementary data are available at \textit{Journal Name}
%online.}

\abstract{Faithfulness is the foundation of probability distribution and graph in causal discovery and causal inference. In this paper, several unfaithful probability distribution examples are constructed in three--vertices binary causality directed acyclic graph (DAG) structure, which are not faithful to causal DAGs described in J.M.,Robins,et al. Uniform consistency in causal inference. Biometrika (2003),90(3): 491--515. And the general unfaithful probability distribution with multiple independence and conditional independence in binary triple causal DAG is given.}
\keywords{Causality, Directed acyclic graph, Faithfulness, PC algorithm, Causal discovery.}

% \boxedtext{
% \begin{itemize}
% \item Key boxed text here.
% \item Key boxed text here.
% \item Key boxed text here.
% \end{itemize}}

\maketitle

\section{Introduction}
Causal discovery has achieved great progress by representing causal relationships through directed acyclic graphs (DAG) with d--separation
and probability independence and conditional independence ( \citet{Pearl:1988}, \citet{Pearl:1995}, \citet{Pearl:2000}, \citet{Spirtes.etal:1993},
\citet{Spirtes.etal:2000},\citet{Koller.etal:2009}, \citet{Hernan.etal:2020}). Especially in case of vertices number  $n\geq 4$ for Gaussian
distribution and for discrete and binary distribution, conditional dependence relations have been established fruitfully (  \citet{Matus.etal:1995}, \citet{Matus:1995}, \citet{Matus:1999}, \citet{Studeny:1989},  \citet{Studeny:1992}, \citet{Studeny:2021},  \citet{Wermuth.etal:2009}).

Faithfulness of the probability distribution and the graph is originally developed in \citet{Spirtes.etal:1993}. \citet{Robins.etal:2003} addresses
the uniform consistency in causal statistical inference under causal faithful framework.Then, faithfulness is discussed and weak assumptions are proposed by \citet{Ramsey.etal:2006},\citet{Spirtes.etal:2014},\citet{Sadeghi:2017} and \citet{Marx.etal:2021}. The graph involved in faithfulness is called the perfect map of the probability distribution in \citet{Pearl:1988}, and perfact map is rare (\citet{Almond.etal:2015}).
A precise conclusion is that the set of unfaithful distributions has Lebesgue measure zero when the distribution of graph can be parameterised by a family of a finite dimensional parameter distribution (\citet{Meek:1995}, \citet{Spirtes.etal:2000}, \citet{Robins.etal:2003}).
However, faithfulness is still an important problem of probability and graph in causal discovery to use graphical methods for causal inference (\citet{Sadeghi:2017}).

For a triple  of three random variables $(X,Y,Z)$, there are 11 representative DAGs in literature. \citet{Robins.etal:2003} sums up 8 causal DAGs (see Fig 1) according to whether the DAG is faithful to $X\perp\!\!\!\perp Y$, while $Z$ is a confounder, or not; \citet{Sadeghi:2017} discusses unfaithfulness in two DAGs (see Fig 2 (ix) (x)), however a fourth variable is introduced. Fig 1 (iii) and Fig 2 (xi) are colliders in  Pearl (1988).  Fig 1(i), Fig 2 (ix) and Fig 2 (xi) are the three important conditional independence structures in causal analysis, namely diverging,serial,converging or collider (\citet{Koller.etal:2009}). In \citet{Ramsey.etal:2006}, violating faithfulness examples are constructed for showing how the PC algorithm errs (\citet{Spirtes.etal:1993},\citet{Spirtes.etal:2000}).

In order to intensively clarify the basic causal structure of triple $(X,Y,Z)$, several unfaithful probability distribution examples are constructed within the binary three-vertices causal DAG structure, which are not faithful to the 11 specific DAGs aforementioned, distinctly including DAGs proposed in \citet{Robins.etal:2003}. Theoretically, unfaithfulness of probability and graph is an unavoidable problem in S2 of PC algorithm (\citet{Spirtes.etal:2000}, \citet{Ramsey.etal:2006}) to discover a reasonable causal DAG.
The aim of the paper is to construct concise and direct examples to show unfaithful probability distribution,
by constructing compounding multiple independence and conditional independence simultaneously in the same probability.

In \S 2 the binary triple causal DAG and basic lemmas are reviewed, and two fundamental theorems for causal analysis are given. In \S 3 several unfaithful probability examples are constructed. In \S 4 the general unfaithful probability family of binary triple is obtained. And, the discussion is given in  \S 5.

\section{The binary three vertices causal DAG}\label{sec2}

Let $V=(X_1, . . . , X_n)$ be a vector of $n$--dimension random variables, $P$ be a joint probability distribution for $V$,  $G$ be a DAG with  $n$--vertices or nodes of $V$, $E$ be directed arrows set between some pairs of vertices, and there is no directed cycle in $G=(V,E)$.
The random variable $X_i$ takes values in $\mathcal{X}_i$, and arrow from $X_i$ to $X_j$ represent direct causal relationship from $X_i$ to $X_j$, where $X_i$ calls a ``parent'' of $X_j$, and $X_j$ is called a ``descendant'' of $X_i$ .
The triple $(G,V,P)$ is called the Spirtes--Glymour--Scheines causal model.  (\citet{Pearl:2000}, \citet{Spirtes.etal:2000}, \citet{Robins.etal:2003})

Let $pa_G(x_i)$ denote the set of ``parents'' of a variable $X_i$. If  distribution $P$ satisfies
\begin{equation}
P(x_1, . . . , x_k)= \prod_{i=1}^n P\{x_i |pa_G(x_i)\},
\end{equation}
where $P\{x_i |pa_G(xi)\}=P(x_i)$ when $pa_G(x_i)=\emptyset $. $P$ is called ``Markov compatibility'' or ``Markov factorisation'' to $G$ .

Let $\mathscr{P}(G)$ represent all distributions Markov to $G$.
Let $\mathscr{I}(P)$ denote all probability independence and conditional independence of the variables $V$ under $P$.
Let $\mathscr{I}_G=\bigcap_{Q\in \mathscr{P}(G)} \mathscr{I}(Q)$ be all independence and conditional independence common to all the distributions in $\mathscr{P}(G)$. If $\mathscr{I}(P)=\mathscr{I}_G$,  $P$ is called ``faithful'' to $G$. Otherwise, $P$ is called ``unfaithful'' to $G$.

In this paper, only $m=3$,$n=2$, and $\mathcal{X}_i=\{0,1\}$ are discussed, namely, binary three vertices DAG. For convenience, ${X}_1,{X}_2,{X}_3$ are denoted as $X,Y,Z$ respectively.

For any three random variables $(X,Y,Z)$, there are two important lemmas (\citet{Dawid:1979},
\citet{Pearl.etal:2022} , \citet{Lauritzen:1996} , \citet{Cowell.etal:1999} ) as follows.

\begin{lemma} For random variables $(X,Y,Z)$, one has\\
(1) (Symmetry) $X\perp\!\!\!\perp Y \Rightarrow Y\perp\!\!\!\perp X$,  \quad $X\perp\!\!\!\perp Y |Z \Rightarrow Y\perp\!\!\!\perp X|Z$.\\
(2) (Decomposition) $X\perp\!\!\!\perp (Y,Z) $ $\Rightarrow$  ($X\perp\!\!\!\perp Y$) and ($X\perp\!\!\!\perp Z$).\\
(3) (Weak union) $X\perp\!\!\!\perp (Y,Z) $ $\Rightarrow$  ($X\perp\!\!\!\perp Y | Z$) and ($X\perp\!\!\!\perp Z|Y $).\\
(4) (Contraction) ($X\perp\!\!\!\perp Y |Z$) and ($X\perp\!\!\!\perp Z $) $\Rightarrow$  $X\perp\!\!\!\perp (Y,Z)$ .
\end{lemma}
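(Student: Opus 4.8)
The plan is to verify the four semi-graphoid properties directly from the definition of (conditional) independence in terms of the joint probability mass function, which is legitimate here because all variables are discrete (indeed binary). To avoid the nuisance of conditioning on zero-probability events, I would work with the cross-multiplied characterisation: for discrete $(X,Y,Z)$ one has $X\perp\!\!\!\perp Y\mid Z$ if and only if
\[
P(x,y,z)\,P(z)=P(x,z)\,P(y,z)\qquad\text{for all }x,y,z,
\]
and $X\perp\!\!\!\perp Y$ if and only if $P(x,y)=P(x)\,P(y)$ for all $x,y$ (the unconditional case being the degenerate instance in which $Z$ is constant). In this purely multiplicative form every claim reduces to an elementary identity among marginals obtained by summation, and no division is ever required.

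With this in hand I would argue as follows. For \emph{Symmetry}, both displayed identities are visibly symmetric in $X$ and $Y$, so the two implications are immediate. For \emph{Decomposition}, start from $X\perp\!\!\!\perp(Y,Z)$, i.e. $P(x,y,z)=P(x)\,P(y,z)$; summing over $z$ gives $P(x,y)=P(x)\,P(y)$, hence $X\perp\!\!\!\perp Y$, and summing over $y$ gives $P(x,z)=P(x)\,P(z)$, hence $X\perp\!\!\!\perp Z$. For \emph{Weak union}, multiply $P(x,y,z)=P(x)\,P(y,z)$ through by $P(z)$ and insert the identity $P(x,z)=P(x)\,P(z)$ just obtained, yielding $P(x,y,z)\,P(z)=P(x,z)\,P(y,z)$, which is exactly $X\perp\!\!\!\perp Y\mid Z$; the analogous manipulation (using $P(x,y)=P(x)\,P(y)$ and multiplying by $P(y)$) gives $X\perp\!\!\!\perp Z\mid Y$. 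For \emph{Contraction}, the hypotheses are $P(x,y,z)\,P(z)=P(x,z)\,P(y,z)$ together with $P(x,z)=P(x)\,P(z)$; substituting the latter into the former gives $P(x,y,z)\,P(z)=P(x)\,P(z)\,P(y,z)$, and then the identity $P(x,y,z)=P(x)\,P(y,z)$ holds for every $z$ with $P(z)>0$ after cancelling $P(z)$, while for every $z$ with $P(z)=0$ both sides already vanish, so it holds for all $x,y,z$, i.e. $X\perp\!\!\!\perp(Y,Z)$.

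The only genuinely delicate point — the one I would flag explicitly rather than sweep under "divide by $P(z)$" — is the handling of null events, which is precisely why the argument is organised around the cross-multiplied identities, the degenerate strata being dispatched by the trivial remark that both sides are then zero. Apart from that, the whole proof is bookkeeping with finite sums (two terms each, since $\mathcal{X}_i=\{0,1\}$), and I expect it to occupy only a few lines once the product-form characterisation of conditional independence is recorded.
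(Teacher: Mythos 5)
Your proof is correct, but note that the paper itself offers no proof of Lemma 1 at all: it is stated as a known result and attributed to the literature (Dawid 1979; Pearl and Paz 2022; Lauritzen 1996; Cowell et al.\ 1999). What you have done is supply a self-contained elementary verification for the discrete case, based on the cross-multiplied characterisation $P(x,y,z)P(z)=P(x,z)P(y,z)$ of $X\perp\!\!\!\perp Y\mid Z$, and each of the four steps (symmetry by inspection, decomposition and weak union by marginalising and multiplying through, contraction by substitution plus the observation that on strata with $P(z)=0$ both sides vanish because $P(x,y,z)\le P(z)$ and $P(y,z)\le P(z)$) is sound. Your explicit treatment of null conditioning events is a genuine gain over a naive divide-by-$P(z)$ argument, and it is consonant with the convention the paper actually uses elsewhere (e.g.\ Example 4, where $P(X\mid Y,Z)$ does not exist yet conditional independences are still asserted); it also makes clear why the semi-graphoid axioms of Lemma 1 survive degeneracy while the intersection-type property of Lemma 2 does not, which is exactly the distinction the paper's counterexamples exploit. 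So the comparison is simply: the paper cites, you prove, and your route is the standard direct one for finite discrete distributions.
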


\begin{lemma} If  $X\perp\!\!\!\perp Y |Z$ and $X\perp\!\!\!\perp Z |Y$, then $X\perp\!\!\!\perp (Y ,Z)$.\\
lemma 2 holds under additionally conditions--essentially that there be no non--trivial logical relationship between Y and Z. OR, $(X,Y,Z)$ has a continuous positive probability density.
\end{lemma}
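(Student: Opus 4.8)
The plan is to prove Lemma 2 in the binary (equivalently, strictly positive) setting by reducing both conditional-independence hypotheses to statements about conditional probabilities and then solving an elementary functional equation. Throughout I assume the stated side condition: that there is no non-trivial logical constraint linking $Y$ and $Z$, which in the discrete case amounts to saying the joint support of $(Y,Z)$ is the Cartesian product of the support of $Y$ and the support of $Z$; for binary variables admitting a strictly positive density this support is all of $\{0,1\}^2$, so every conditioning event below has positive probability.

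First I would translate the assumptions. Since the relevant conditioning events have positive probability, $X\perp\!\!\!\perp Y\mid Z$ is equivalent to $P(x\mid y,z)=P(x\mid z)$ for every $x$ and every admissible $(y,z)$, and $X\perp\!\!\!\perp Z\mid Y$ is equivalent to $P(x\mid y,z)=P(x\mid y)$ over the same range. Combining the two identities gives $P(x\mid z)=P(x\mid y)$ whenever $(y,z)$ lies in the support of $(Y,Z)$.

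The key step is then purely algebraic: because that support is a product set, the quantity $P(x\mid z)$ depends only on $z$ while $P(x\mid y)$ depends only on $y$, and an equality of the form ``function of $z$ $=$ function of $y$'' holding over all admissible pairs forces each side to be a function of $x$ alone, say $g(x)$. Marginalising, $P(x)=\sum_{z}P(x\mid z)\,P(z)=g(x)\sum_{z}P(z)=g(x)$, so $g(x)=P(x)$. Hence $P(x\mid y,z)=P(x\mid z)=P(x)$ for all $(x,y,z)$ in the support, which is precisely $X\perp\!\!\!\perp(Y,Z)$; values $x$ with $P(x)=0$ contribute nothing and are handled trivially. (An equivalent route is to rewrite the hypotheses as $P(x,y,z)P(z)=P(x,z)P(y,z)$ and $P(x,y,z)P(y)=P(x,y)P(y,z)$ and manipulate the products, but the conditional-probability form is cleanest.)

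Finally I would record why the side condition cannot be removed, using the standard counterexample: take $X=Y=Z$ to be a single fair coin flip. Given $Z$, both $X$ and $Y$ are constant, so $X\perp\!\!\!\perp Y\mid Z$ holds, and symmetrically $X\perp\!\!\!\perp Z\mid Y$; yet $X$ is a deterministic function of $(Y,Z)$, so $X\perp\!\!\!\perp(Y,Z)$ fails. The only genuine obstacle in the proof is the support/product-set step — this is exactly where ``no non-trivial logical relationship between $Y$ and $Z$'' (equivalently, strict positivity of the density) enters, and it is also exactly the place where the conclusion collapses without it.
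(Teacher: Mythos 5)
Your argument is correct, but note that the paper itself gives no proof of Lemma~2 at all: it is stated as a known result imported from the conditional-independence literature (Dawid 1979; Lauritzen 1996; Pearl and Paz 2022; Cowell et al.\ 1999), with the side condition quoted verbatim, so there is no in-paper derivation to match yours against. What you have written is the standard proof of the intersection property, and it is sound: under the product-support hypothesis the two assumptions give $P(x\mid z)=P(x\mid y)$ on the full product of the marginal supports, fixing one coordinate shows each side is constant in its argument, marginalising identifies the common value with $P(x)$, and hence $P(x,y,z)=P(x)P(y,z)$ on the support (and trivially off it), which is $X\perp\!\!\!\perp(Y,Z)$. The one place where care is needed --- that $P(y,z)>0$ for every pair with $P(y)>0$ and $P(z)>0$, so that the equality really holds on a product set --- is exactly where you invoke the ``no non-trivial logical relationship between $Y$ and $Z$'' clause, and you flag this correctly. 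Your closing counterexample ($X=Y=Z$ a fair coin) is essentially the paper's own Example~6 (and in spirit Example~4), which the paper uses precisely to show that Lemma~2 fails without the side condition; so your proposal complements the paper by supplying the omitted proof rather than diverging from an existing one.
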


\begin{figure}[!htbp]
\begin{minipage}{\textwidth}
\centering  {Subset A}\\
\subfloat[{(i)}]{\includegraphics[width=2.5cm]{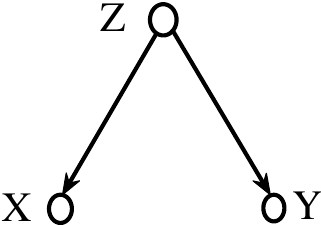}}
\hfill
\subfloat[{(ii)}]{\includegraphics[width=2.5cm]{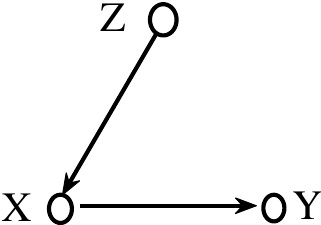}}
\hfill
\subfloat[{(iii)}]{\includegraphics[width=2.5cm]{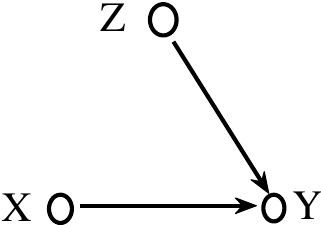}}
\hfill
\subfloat[{(iv)}]{\includegraphics[width=2.5cm]{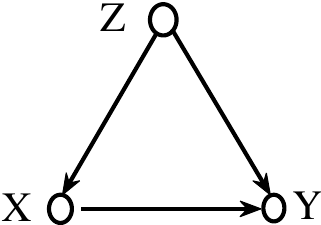}}
\hfill
\subfloat[{(v)}]{\includegraphics[width=2.5cm]{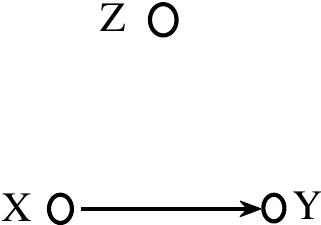}}
\end{minipage}
\qquad
\vfill
\vspace{0.2cm}
\begin{minipage}{0.6\textwidth}
\centering  {Subset B}\\
\subfloat[{(vi)}]{\includegraphics[width=2.5cm]{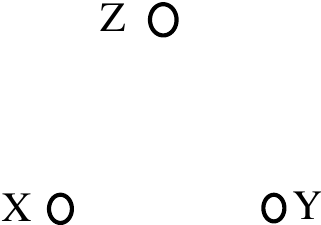}}
\hfill
\subfloat[{(vii)}]{\includegraphics[width=2.5cm]{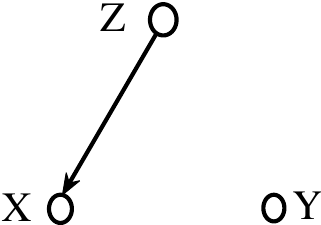}}
\hfill
\subfloat[{(viii)}]{\includegraphics[width=2.5cm]{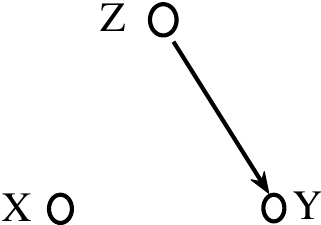}}

\end{minipage}
\caption{3-vertices causal DAGs from $X$ to $Y$ with confounder $Z$.}
\end{figure}

\begin{figure}[!htbp]
\begin{minipage}{0.6\textwidth}
\centering  {Subset C}\\
\subfloat[{(ix)}]{\includegraphics[width=2.5cm]{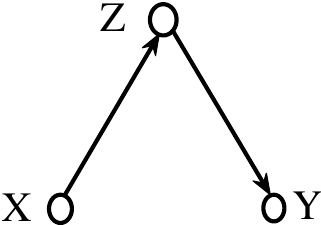}}
\hfill
\subfloat[{(x)}]{\includegraphics[width=2.5cm]{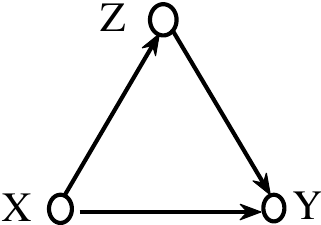}}
\hfill
\subfloat[{(xi)}]{\includegraphics[width=2.5cm]{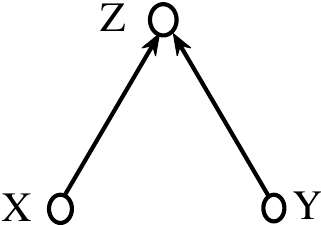}}
\end{minipage}
\caption{3-vertices causal DAGs from $X$ to $Y$, and $Z$ is not a confounder.}
\end{figure}

All possible probability independence and conditional independence of random variables $\{X,Y,Z\}$ are listed as follows, though there may be some relationships deduced by Lemma 1 and Lemma 2.
\begin{equation}
\begin{array}{ll}
\mathscr{I}_{\bigcup}(P)= & \left\{\emptyset; X\perp\!\!\!\perp Y; X\perp\!\!\!\perp Z; Y\perp\!\!\!\perp Z;
                     X\perp\!\!\!\perp (Y,Z); Y\perp\!\!\!\perp (X,Z); Z\perp\!\!\!\perp (X,Y); \right. \\
                   &\left.   X\perp\!\!\!\perp Y|Z; X\perp\!\!\!\perp Z|Y; Y\perp\!\!\!\perp Z|X; X\perp\!\!\!\perp Y\perp\!\!\!\perp Z \right\}
\end{array}
\end{equation}
where $X\perp\!\!\!\perp Y\perp\!\!\!\perp Z $ denotes mutual independence of $\{X,Y,Z\}$. The ``$;$'' in $\mathscr{I}_{\bigcup}(P)$ denotes union of all possible probability independence and conditional independence.

As the basic principle of faithfulness in causal DAG, the faithfulness of graph and the probability distribution satisfying intersection of all possible non--empty probability independence and conditional independence in $\mathscr{I}_{\bigcup}(P)$ is described in Theorem 1.

\begin{theorem}[Theorem 1]
The faithful graph of binary random variables $V=\{X,Y,Z\}$ satisfying
\begin{equation}
\begin{array}{ll}
\mathscr{I}_{\bigcap}(P)= & \left\{ X\perp\!\!\!\perp Y, X\perp\!\!\!\perp Z, Y\perp\!\!\!\perp Z, X\perp\!\!\!\perp (Y,Z), Y\perp\!\!\!\perp (X,Z), Z\perp\!\!\!\perp (X,Y), \right. \\
&\left. X\perp\!\!\!\perp Y|Z, X\perp\!\!\!\perp Z|Y, Y\perp\!\!\!\perp Z|X, X\perp\!\!\!\perp Y\perp\!\!\!\perp Z   \right\}

\end{array}
\end{equation}
is $G=(V,\emptyset)$  (Fig 1 (vi)). \\
where $\mathscr{I}_{\bigcap}(P)$ denotes the intersection of all non--empty probability independence and conditional independence.
And the ``$,$'' in $\mathscr{I}_{\bigcap}(P)$ denotes intersection of probability independence and conditional independence.
\end{theorem}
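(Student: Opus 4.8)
The plan is to show that the probability distribution $P$ on $V=\{X,Y,Z\}$ that realises \emph{all} the independence and conditional independence statements listed in $\mathscr{I}_{\bigcap}(P)$ simultaneously is exactly the product of three marginals, i.e.\ $P(x,y,z)=P(x)P(y)P(z)$, and then to argue that the only DAG to which such a $P$ is faithful is the empty graph $G=(V,\emptyset)$ of Fig.~1~(vi). The first half is pure probability: since mutual independence $X\perp\!\!\!\perp Y\perp\!\!\!\perp Z$ is itself one of the conjuncts, $P$ already factorises as a product of its one-dimensional marginals, and all the other listed relations ($X\perp\!\!\!\perp Y$, $X\perp\!\!\!\perp(Y,Z)$, $X\perp\!\!\!\perp Y\mid Z$, etc.) are immediate consequences of that factorisation via Lemma~1 (Decomposition, Weak union) — so the conjunction in \eqref{} is consistent and is satisfied precisely by the product distributions. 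Conversely, using Lemma~2 (or the binary/positive-density hypothesis stated there), one checks that no proper subset of arrows can be added: any single arrow, say $X\to Y$, would generically destroy $X\perp\!\!\!\perp Y$, hence a DAG with a nonempty edge set cannot have $\mathscr{I}_G$ containing all of $\mathscr{I}_{\bigcap}(P)$.

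Next I would compare $\mathscr{I}(P)$ with $\mathscr{I}_G$ for each candidate DAG $G$ on three vertices. For $G=(V,\emptyset)$, every distribution Markov to $G$ is a product distribution, so $\mathscr{I}_G$ is exactly the set of independence statements valid for all product distributions, which is precisely the list in \eqref{} — here one uses that for binary variables the product structure forces no \emph{extra} coincidental independences beyond those in $\mathscr{I}_{\bigcap}(P)$, so $\mathscr{I}(P)=\mathscr{I}_G$ and $P$ is faithful to $G$. For any DAG $G'$ with at least one edge, $\mathscr{I}_{G'}$ is strictly smaller (it misses at least the independence corresponding to that edge), so $\mathscr{I}(P)\not\subseteq\mathscr{I}_{G'}$ and $P$ is \emph{not} faithful to $G'$; this rules out the other ten DAGs of Figs.~1--2 and establishes uniqueness.

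The key steps, in order, are: (1) observe that $X\perp\!\!\!\perp Y\perp\!\!\!\perp Z\in\mathscr{I}_{\bigcap}(P)$ forces $P$ to be a product distribution; (2) verify via Lemma~1 that every other statement in $\mathscr{I}_{\bigcap}(P)$ follows, so the conjunction is consistent; (3) identify $\mathscr{I}_{(V,\emptyset)}$ with the list in \eqref{} and conclude faithfulness; (4) show any DAG with a nonempty edge set fails to capture the full list, invoking Lemma~2 for the binary case to guarantee that adding an edge genuinely removes an independence.

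The main obstacle I anticipate is step~(4): one must be careful that Lemma~2's side condition (no nontrivial logical relation between the conditioning variables, or a positive density) is actually in force for binary variables, since for degenerate binary distributions an edge can be "present" in $G$ yet carry no statistical dependence, which would blur the faithful/unfaithful distinction. I would handle this by restricting, as the statement implicitly does, to the representative (non-degenerate) distributions, so that the correspondence between edges of $G$ and failures of the corresponding (conditional) independence is exact; with that caveat the argument goes through cleanly.
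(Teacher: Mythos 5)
Your proposal is sound in substance, but you should know that the paper itself gives no proof of Theorem 1: the statement is put forward as the basic faithfulness principle, supported only by the remark that it forces positivity of all marginals and by Example 1 (the uniform distribution on $\{0,1\}^3$), so your argument from the definition $\mathscr{I}(P)=\mathscr{I}_G$ with $\mathscr{I}_G=\bigcap_{Q\in\mathscr{P}(G)}\mathscr{I}(Q)$ supplies more than the paper records. Your route makes explicit the two halves that matter: for $G=(V,\emptyset)$ the Markov class is exactly the product distributions, whose common (conditional) independences exhaust the whole list in the theorem, and no DAG $G'$ with an edge can have $\mathscr{I}_{G'}$ containing that full list, so equality $\mathscr{I}(P)=\mathscr{I}_{G'}$ is impossible. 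Two points deserve tightening. First, in your step (4) Lemma 2 (the intersection property) is not the relevant tool; what is needed is, for each edge of $G'$, one explicit binary distribution Markov to $G'$ in which the corresponding dependence holds, which is immediate to write down and turns ``generically destroys'' into a proof. Second, your degeneracy caveat is where the real care lies and it interacts with the paper's own conventions: degenerate product distributions (e.g.\ the point mass of Example 10) are Markov to the empty graph, and whether they satisfy $X\perp\!\!\!\perp Y\mid Z$ depends on whether conditional independence is read as holding vacuously on null conditioning events; the paper's Example 10 adopts the convention that it does not, and under that convention the intersection defining $\mathscr{I}_{(V,\emptyset)}$ would fall strictly below the stated list. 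Your restriction to strictly positive (non-degenerate) distributions --- equivalently the paper's remark that Theorem 1 ``implies strong assumptions'' such as $P(XYZ)>0$ --- is exactly the hypothesis needed for the identification $\mathscr{I}_{(V,\emptyset)}=\mathscr{I}_{\bigcap}(P)$, and with it your argument goes through.
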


Theorem 1 implies strong assumptions that $\{P(X)>0,P(Y)>0,P(Z)>0,P(XY)>0,P(YZ)>0,P(XZ)>0\}$ and $P(XYZ)>0$.\footnote{The probability formula is written in abbreviate form in this paper.}
An probability distribution example of Theorem 1 is constructed as follows,

\begin{example}
Let the joint probability distribution of random variables $(X,Y,Z)$ be as follows,
\begin{equation}
P(X=i,Y=j,Z=k)=\frac{1}{8},  \forall  i,j,k=0,1.
\end{equation}
Then, $P(X,Y,Z)>0$ and satisfies
\begin{equation}
\begin{array}{ll}
\mathscr{I}(P)= & \left\{ X\perp\!\!\!\perp Y, X\perp\!\!\!\perp Z, Y\perp\!\!\!\perp Z,
                     X\perp\!\!\!\perp (Y,Z), Y\perp\!\!\!\perp (X,Z), Z\perp\!\!\!\perp (X,Y),\right. \\
                 &\left. X\perp\!\!\!\perp Y|Z, X\perp\!\!\!\perp Z|Y, Y\perp\!\!\!\perp Z|X,X\perp\!\!\!\perp Y\perp\!\!\!\perp Z   \right\}
\end{array}
\end{equation}
\end{example}

Example 1 is faithful to Fig 1 (vi), and the most important is a probability distribution with only $X\perp\!\!\!\perp Y\perp\!\!\!\perp Z$ may not be faithful to Fig 1 (vi) (see Example 7).

Theorem 1 establishes the basic principle of faithfulness structure of three--vertices causal DAG.
The basic triple causal structure will strictly determine causal discovery structure.

The relationship among pairwise independence, condition independence and mutual independence is obtained in Theorem 2.

\begin{theorem}[Theorem 2]
For three binary random variables $\{X,Y,Z\}$ , if $P(Z)>0$, then
\begin{equation}
\begin{array}{l}
\left\{ X\perp\!\!\!\perp Y, X\perp\!\!\!\perp Z, Y\perp\!\!\!\perp Z, X\perp\!\!\!\perp Y|Z \right\}  \Longleftrightarrow X\perp\!\!\!\perp Y\perp\!\!\!\perp Z.
\end{array}
\end{equation}
\end{theorem}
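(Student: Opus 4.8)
The plan is to prove both implications of the equivalence, working directly with the joint probability mass function of the binary triple $(X,Y,Z)$. The ``$\Longleftarrow$'' direction is immediate: mutual independence $X\perp\!\!\!\perp Y\perp\!\!\!\perp Z$ means $P(x,y,z)=P(x)P(y)P(z)$ for all $x,y,z\in\{0,1\}$, from which each of the pairwise independences $X\perp\!\!\!\perp Y$, $X\perp\!\!\!\perp Z$, $Y\perp\!\!\!\perp Z$ follows by summing out the third variable, and $X\perp\!\!\!\perp Y|Z$ follows on the support of $Z$ since $P(x,y|z)=P(x)P(y)=P(x|z)P(y|z)$ whenever $P(z)>0$. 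So the real content is the forward direction.

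For ``$\Longrightarrow$'', assume the four relations on the left. The key step is to combine $X\perp\!\!\!\perp Y$ with $X\perp\!\!\!\perp Y|Z$ to pin down the conditional distributions. Since $X$ and $Y$ are binary, introduce parameters $a=P(X=1)$, $b=P(Y=1)$, $c=P(Z=1)$, and for each value $k$ with $P(Z=k)>0$ write $P(X=1|Z=k)$ and $P(Y=1|Z=k)$. From $X\perp\!\!\!\perp Y|Z$ we get $P(X=1,Y=1|Z=k)=P(X=1|Z=k)P(Y=1|Z=k)$ for each such $k$; summing the identity $P(X=1,Y=1,Z=k)=P(X=1|Z=k)P(Y=1|Z=k)P(Z=k)$ over $k$ and comparing with $X\perp\!\!\!\perp Y$, which gives $P(X=1,Y=1)=ab$, yields a constraint. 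The crux is then to use the two remaining pairwise independences $X\perp\!\!\!\perp Z$ and $Y\perp\!\!\!\perp Z$: these force $P(X=1|Z=k)=a$ and $P(Y=1|Z=k)=b$ for every $k$ in the support of $Z$ (here is where $P(Z)>0$, i.e.\ that $Z$ genuinely has two values with positive probability, or the degenerate case, must be handled). Substituting back gives $P(X=1,Y=1,Z=k)=ab\,P(Z=k)$, and repeating the bookkeeping for the other three joint cells $(X=0,Y=1)$, $(X=1,Y=0)$, $(X=0,Y=0)$ — each handled identically using $1-a$, $1-b$ in place of $a$, $b$ — shows $P(x,y,z)=P(x)P(y)P(z)$ for all $x,y,z$, which is exactly $X\perp\!\!\!\perp Y\perp\!\!\!\perp Z$.

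I would organize the forward direction as follows: (1) reduce to showing $P(x,y,z)=P(x)P(y)P(z)$ cellwise; (2) from $X\perp\!\!\!\perp Z$ and $Y\perp\!\!\!\perp Z$ conclude $P(X=i|Z=k)=P(X=i)$ and $P(Y=j|Z=k)=P(Y=j)$ on $\{k:P(Z=k)>0\}$; (3) from $X\perp\!\!\!\perp Y|Z$ write $P(X=i,Y=j,Z=k)=P(X=i|Z=k)P(Y=j|Z=k)P(Z=k)$ and substitute the expressions from (2); (4) observe the resulting product $P(X=i)P(Y=j)P(Z=k)$ is the desired factorization, so $X\perp\!\!\!\perp Y\perp\!\!\!\perp Z$ holds (and note $X\perp\!\!\!\perp Y$ was not even needed for this direction, though it is part of the hypothesis). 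The main obstacle is the degenerate case: if $P(Z=k)=0$ for one value $k$, then conditioning on $Z=k$ is undefined and step (2) must be read only over the support; but then $Z$ is a.s.\ constant, every statement involving $Z$ is trivially true, and mutual independence reduces to $X\perp\!\!\!\perp Y$, which is assumed — so the equivalence still holds, and this edge case should be dispatched explicitly. A secondary subtlety worth a sentence is that for binary variables independence of the ``$=1$'' events is equivalent to full independence, which is what lets us work with single scalar parameters rather than full tables.
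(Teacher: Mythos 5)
Your proposal is correct and follows essentially the same route as the paper's own proof: for the forward direction you replace $P(X=i|Z=k)$ and $P(Y=j|Z=k)$ by the marginals via $X\perp\!\!\!\perp Z$ and $Y\perp\!\!\!\perp Z$, factor the cell probability using $X\perp\!\!\!\perp Y|Z$, and read off $P(x,y,z)=P(x)P(y)P(z)$, while the reverse direction is the same direct verification. Your added remarks (that $X\perp\!\!\!\perp Y$ is not needed for the forward implication, and the explicit treatment of a degenerate $Z$) are sound but do not change the argument, which is the one the paper gives.
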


\begin{proof}
`'$\Longrightarrow$ ''
  Because $X\perp\!\!\!\perp Z$, $Y\perp\!\!\!\perp Z$ ,$ X\perp\!\!\!\perp Y|Z$, for $\forall i,j,k=0,1$, one has $P(X=i|Z=k)=P(X=i)$, $P(Y=j|Z=k)=P(Y=j)$, $P(X=i,Y=j|Z=k)=P(X=i|Z=k)P(Y=j|Z=k)$.
  Hence,$ P(X=i,Y=j,Z=k)=P(Z=k)P(X=i,Y=j|Z=k)=P(Z=k)P(X=i|Z=k)P(Y=j|Z=k)$
  $=P(X=i)P(Y=j)P(Z=k)$. Then, $X\perp\!\!\!\perp Y\perp\!\!\!\perp Z$ holds.\\
`'$\Longleftarrow$  ''
  Since $X\perp\!\!\!\perp Y\perp\!\!\!\perp Z$, for $\forall i,j,k=0,1$, one has 
  $X\perp\!\!\!\perp Z$, $Y\perp\!\!\!\perp Z$, $P(X=i|Z=k)=P(X=i)$, $P(Y=j|Z=k)=P(Y=j)$.
   And,$ P(X=i,Y=j|Z=k)=P(X=i,Y=j,Z=k)/P(Z=k)$\\
   $=P(X=i)P(Y=j)P(Z=k)/P(Z=k)=P(X=i)P(Y=j)=P(X=i|Z=k)P(Y=j|Z=k)$. 
   Hence, $X\perp\!\!\!\perp Y$, $X\perp\!\!\!\perp Z$, $Y\perp\!\!\!\perp Z$, $X\perp\!\!\!\perp Y|Z$ hold.

\end{proof}

\section{Unfaithful probability distribution in binary three vertices DAG} \label{sec3}

Several typical unfaithful probability distributions are constructed in this section, the verification of unfaithfulness is supported by Lemma 1, Lemma 2, Theorem 1 and d--separation \footnote{The detail calculation of verification is omitted for paper length limit. And, one can design as many as possible counterexamples similar to the given structures using mathematical skills. }.
\begin{example}
Let the joint probability distribution of random variables $(X,Y,Z)$ be as follows,
\begin{equation}
\begin{array}{lll}
P(X=0,Y=0,Z=0)=\frac{1}{16}, & P(X=0,Y=0,Z=1)=\frac{3}{16}\\
P(X=0,Y=1,Z=0)=\frac{3}{16}, & P(X=0,Y=1,Z=1)=\frac{1}{16}\\
P(X=1,Y=0,Z=0)=\frac{3}{16}, & P(X=1,Y=0,Z=1)=\frac{1}{16}\\
P(X=1,Y=1,Z=0)=\frac{1}{16}, & P(X=1,Y=1,Z=1)=\frac{3}{16}\\
\end{array}
\end{equation}
Then, $P(X,Y,Z)$ ($P(X,Y,Z)>0$) satisfies
\begin{equation}
\begin{array}{ll}
\mathscr{I}(P)= & \left\{ X\perp\!\!\!\perp Y, Y\perp\!\!\!\perp Z, X\perp\!\!\!\perp Z \right\}
\end{array}
\end{equation}
\end{example}

According to Theorem 1, Example 1 is faithful to Fig 1 (vi) and it has more intersection independence and conditional independence information than Example 2. Though the most possible faithful graph of Example 2  is Fig 1 (vi),  Example 2 is unfaithful to Fig 1 (vi).
\begin{example}
Let the joint probability distribution of random variables $(X,Y,Z)$ be as follows,
\begin{equation}
\begin{array}{lll}
P(X=0,Y=0,Z=0)=\frac{1}{4}, & P(X=0,Y=0,Z=1)=\frac{4}{15}\\
P(X=0,Y=1,Z=0)=\frac{1}{12}, & P(X=0,Y=1,Z=1)=\frac{1}{15}\\
P(X=1,Y=0,Z=0)=\frac{1}{8}, & P(X=1,Y=0,Z=1)=\frac{4}{30}\\
P(X=1,Y=1,Z=0)=\frac{1}{24}, & P(X=1,Y=1,Z=1)=\frac{1}{30}\\
\end{array}
\end{equation}
Then, $P(X,Y,Z)$ ($P(X,Y,Z)>0$) satisfies
\begin{equation}
\begin{array}{ll}
\mathscr{I}(P)= & \left\{ X\perp\!\!\!\perp Y, X\perp\!\!\!\perp Z, X\perp\!\!\!\perp (Y,Z),
                      X\perp\!\!\!\perp Y|Z \right\}
\end{array}
\end{equation}
\end{example}

According to Lemma 1 (4)(2), $X\perp\!\!\!\perp Y|Z, X\perp\!\!\!\perp Z$ $\Rightarrow$ $X\perp\!\!\!\perp (Y,Z)$ $\Rightarrow$ $Y \perp\!\!\!\perp Z$. Applying  Theorem 1, this probability distribution is unfaithful to Fig 1 (vi), and also not faithful to any DAG in Fig 1 and Fig 2. This example is similar to one in \citet{Ramsey.etal:2006}, which goal is to show how the PC algorithm (\citet{Spirtes.etal:2000}) errs.

\begin{example}
Let the joint probability distribution of random variables $(X,Y,Z)$ be as follows,
\begin{equation}
\begin{array}{lll}
P(X=0,Y=0,Z=0)=\frac{1}{5}, & P(X=0,Y=0,Z=1)=0\\
P(X=0,Y=1,Z=0)=0, & P(X=0,Y=1,Z=1)=\frac{3}{10}\\
P(X=1,Y=0,Z=0)=\frac{3}{10}, & P(X=1,Y=0,Z=1)=0\\
P(X=1,Y=1,Z=0)=0, & P(X=1,Y=1,Z=1)=\frac{1}{5}\\
\end{array}
\end{equation}
Then, $P(X,Y,Z)$ satisfies
\begin{equation}
\begin{array}{ll}
\mathscr{I}(P)= & \left\{ X\perp\!\!\!\perp Y|Z, X\perp\!\!\!\perp Z|Y \right\}
\end{array}
\end{equation}
\end{example}
And, $P(X|Y,Z)$ does not exist.
Example 4 is  unfaithful to any DAG of Fig 1 and Fig 2. It shows that generally $ X\perp\!\!\!\perp Y|Z, X\perp\!\!\!\perp Z|Y$ can not deduce $ X\perp\!\!\!\perp (Y,Z)$. This is also a counterexample of Lemma 2.

\begin{example}
Let the joint probability distribution of random variables $(X,Y,Z)$ be as follows,
\begin{equation}
\begin{array}{lll}
P(X=0,Y=0,Z=0)=\frac{1}{4}, & P(X=0,Y=0,Z=1)=\frac{3}{10}\\
P(X=0,Y=1,Z=0)=\frac{1}{8}, & P(X=0,Y=1,Z=1)=\frac{3}{40}\\
P(X=1,Y=0,Z=0)=\frac{1}{12}, & P(X=1,Y=0,Z=1)=\frac{1}{10}\\
P(X=1,Y=1,Z=0)=\frac{1}{24}, & P(X=1,Y=1,Z=1)=\frac{1}{40}\\
\end{array}
\end{equation}
Then, $P(X,Y,Z)$ satisfies
\begin{equation}
\begin{array}{ll}
\mathscr{I}(P)= & \left\{ X\perp\!\!\!\perp Y, X\perp\!\!\!\perp Z \right\}
\end{array}
\end{equation}
\end{example}

Example 5 satisfies $\left\{ X\perp\!\!\!\perp Y, X\perp\!\!\!\perp Z \right\}$, however it is not faithful to any graph of Fig 1 and Fig 2.
As Fig 1 (viii) implies $\left\{ X\perp\!\!\!\perp (Y,Z) \right\}$, which does not hold in Example 5. Example 5 is not faithful to Fig 1 (viii).

\begin{example}
Let the joint probability distribution of random variables $(X,Y,Z)$ be as follows,
\begin{equation}
\begin{array}{lll}
P(X=0,Y=0,Z=0)=\frac{1}{4}, & P(X=0,Y=0,Z=1)=0\\
P(X=0,Y=1,Z=0)=0, & P(X=0,Y=1,Z=1)=0\\
P(X=1,Y=0,Z=0)=0, & P(X=1,Y=0,Z=1)=0\\
P(X=1,Y=1,Z=0)=0, & P(X=1,Y=1,Z=1)=\frac{3}{4}\\
\end{array}
\end{equation}
Then, $P(X,Y,Z)$ satisfies
\begin{equation}
\begin{array}{ll}
\mathscr{I}(P)= & \left\{ X \perp\!\!\!\perp Y|Z, X\perp\!\!\!\perp Z|Y, Y\perp\!\!\!\perp Z|X \right\}
\end{array}
\end{equation}
\end{example}

Example 6 is not faithful to any DAG of Fig 1 and Fig 2 according to PC algorithm. It is also a counterexample of Lemma 2.
Moreover, this example also shows that
$\left\{ X \perp\!\!\!\perp Y|Z, X\perp\!\!\!\perp Z|Y, Y\perp\!\!\!\perp Z|X \right\}\nRightarrow \left\{ X \perp\!\!\!\perp Y, X\perp\!\!\!\perp Z, Y\perp\!\!\!\perp Z \right\} $, however it is true  in Gaussian distribution case \citet{Sullivant:2009}.

\begin{example}
Let the joint probability distribution of random variables $(X,Y,Z)$ be as follows,
\begin{equation}
\begin{array}{lll}
P(X=0,Y=0,Z=0)=\frac{1}{4}, & P(X=0,Y=0,Z=1)=\frac{3}{4}\\
P(X=0,Y=1,Z=0)=0, & P(X=0,Y=1,Z=1)=0\\
P(X=1,Y=0,Z=0)=0, & P(X=1,Y=0,Z=1)=0\\
P(X=1,Y=1,Z=0)=0, & P(X=1,Y=1,Z=1)=0\\
\end{array}
\end{equation}
Then, $P(X,Y,Z)$ satisfies
\begin{equation}
\begin{array}{ll}
\mathscr{I}(P)= & \left\{X\perp\!\!\!\perp Y, X\perp\!\!\!\perp Z, Y\perp\!\!\!\perp Z, X \perp\!\!\!\perp Y|Z \right\}
\end{array}
\end{equation}
\end{example}

 Though $\left\{X\perp\!\!\!\perp Y, X\perp\!\!\!\perp Z, Y\perp\!\!\!\perp Z, X \perp\!\!\!\perp Y|Z \right\} $ $\Longrightarrow$ $\left\{ X \perp\!\!\!\perp Y \perp\!\!\!\perp Z \right\} $ according to Theorem 2, Example 7 is not faithful to Fig 1 (vi)  according to Theorem 1. And, it is unfaithful to any DAG in  Fig 1 and Fig 2.

\begin{example}
Let the joint probability distribution of random variables $(X,Y,Z)$ be as follows,
\begin{equation}
\begin{array}{lll}
P(X=0,Y=0,Z=0)=\frac{1}{3}, & P(X=0,Y=0,Z=1)=0\\
P(X=0,Y=1,Z=0)=0,           & P(X=0,Y=1,Z=1)=0\\
P(X=1,Y=0,Z=0)=\frac{1}{3}, & P(X=1,Y=0,Z=1)=0\\
P(X=1,Y=1,Z=0)=\frac{1}{3}, & P(X=1,Y=1,Z=1)=0\\
\end{array}
\end{equation}
Then, $P(X,Y,Z)$ satisfies
\begin{equation}
\begin{array}{ll}
\mathscr{I}(P)= & \left\{ Y\perp\!\!\!\perp Z, X\perp\!\!\!\perp Z, Y\perp\!\!\!\perp Z|X, X \perp\!\!\!\perp Z|Y \right\}
\end{array}
\end{equation}
\end{example}

Example 8 is not faithful to any graph of Fig 1 and Fig 2, since $ Z\perp\!\!\!\perp (X, Y)$ does not hold.

\begin{example}
Let the joint probability distribution of random variables $(X,Y,Z)$ be as follows,
\begin{equation}
\begin{array}{lll}
P(X=0,Y=0,Z=0)=0,           & P(X=0,Y=0,Z=1)=0\\
P(X=0,Y=1,Z=0)=0,           & P(X=0,Y=1,Z=1)=0\\
P(X=1,Y=0,Z=0)=\frac{1}{4}, & P(X=1,Y=0,Z=1)=\frac{1}{4}\\
P(X=1,Y=1,Z=0)=\frac{1}{4}, & P(X=1,Y=1,Z=1)=\frac{1}{4}\\
\end{array}
\end{equation}
Then, $P(X,Y,Z)$ satisfies
\begin{equation}
\begin{array}{ll}
\mathscr{I}(P)= & \left\{  X \perp\!\!\!\perp Y, Y\perp\!\!\!\perp Z, X\perp\!\!\!\perp Z, X\perp\!\!\!\perp Y|Z, X\perp\!\!\!\perp Z|Y, X\perp\!\!\!\perp (Y,Z) \right\}
\end{array}
\end{equation}
\end{example}

Example 9 is not faithful to any graph of Fig 1 and Fig 2. According to Theorem 2, $\left\{  X \perp\!\!\!\perp Y, Y\perp\!\!\!\perp Z, X\perp\!\!\!\perp Z, X\perp\!\!\!\perp Y|Z, X\perp\!\!\!\perp Z|Y, X\perp\!\!\!\perp (Y,Z) \right\}$ $\Longrightarrow$ $X\perp\!\!\!\perp Y\perp\!\!\!\perp Z$, but it is not faithful to Fig 1 (vi) according to Theorem 1.

\begin{example}
Let the joint probability distribution of random variables $(X,Y,Z)$ be as follows,
\begin{equation}
\begin{array}{lll}
P(X=0,Y=0,Z=0)=1,           & P(X=0,Y=0,Z=1)=0\\
P(X=0,Y=1,Z=0)=0,           & P(X=0,Y=1,Z=1)=0\\
P(X=1,Y=0,Z=0)=0, & P(X=1,Y=0,Z=1)=0\\
P(X=1,Y=1,Z=0)=0, & P(X=1,Y=1,Z=1)=0\\
\end{array}
\end{equation}
Then, $P(X,Y,Z)$ satisfies
\begin{equation}
\begin{array}{ll}
\mathscr{I}(P)= & \left\{  X \perp\!\!\!\perp Y, Y\perp\!\!\!\perp Z, X\perp\!\!\!\perp Z, X\perp\!\!\!\perp Y \perp\!\!\!\perp Z \right\}
\end{array}
\end{equation}
\end{example}

Although Example 10 satisfies pairwise independence and mutual independence, none conditional independence exists, it is not faithful to Fig 1 (vi) according to Theorem 1. And, it is unfaithful to any DAG of Fig 1 and Fig 2.

\section{General unfaithful probability distribution in Triple causal DAG } \label{sec4}

Generally, the faithfulness of probability distribution in binary triple $(X,Y,Z)$ with  `'  Cause X and Outcome Y'' is obtained in Theorem 3.

\begin{theorem}[Theorem 3]
For any binary triple $(X,Y,Z)$ within  `'  Cause X and Outcome Y'' causal structure, consider all the intersection of multiple combination of independence and conditional independence as follows
%%% 1)   C_3^2+C_3^2+C_3^3+C_3^3
\begin{equation}
\begin{array}{ll}
\mathscr{I}_{I:1}(P)= & \left\{X\perp\!\!\!\perp Y, X\perp\!\!\!\perp Z \right\} \\
\mathscr{I}_{I:2}(P)= & \left\{X\perp\!\!\!\perp Y, Y\perp\!\!\!\perp Z \right\} \\
\mathscr{I}_{I:3}(P)= & \left\{X\perp\!\!\!\perp Z, Y\perp\!\!\!\perp Z \right\} \\
\mathscr{I}_{I:4}(P)= & \left\{X\perp\!\!\!\perp Y, X\perp\!\!\!\perp Z, Y\perp\!\!\!\perp Z \right\} \\
\mathscr{I}_{I:5}(P)= & \left\{X \perp\!\!\!\perp Y|Z, X \perp\!\!\!\perp Z|Y \right\} \\
\mathscr{I}_{I:6}(P)= & \left\{X \perp\!\!\!\perp Y|Z, Y \perp\!\!\!\perp Z|X \right\} \\
\mathscr{I}_{I:7}(P)= & \left\{Y \perp\!\!\!\perp Z|X, X \perp\!\!\!\perp Z|Y \right\} \\
\mathscr{I}_{I:8}(P)= & \left\{X \perp\!\!\!\perp Y|Z, X \perp\!\!\!\perp Z|Y, Y\perp\!\!\!\perp Z|X \right\} \\
%%% 2  C_3^1*C_3^1
\mathscr{I}_{II:1}(P)= & \left\{X\perp\!\!\!\perp Y, X \perp\!\!\!\perp Y|Z \right\} \\
\mathscr{I}_{II:2}(P)= & \left\{X\perp\!\!\!\perp Y, X \perp\!\!\!\perp Z|Y \right\} \\
\mathscr{I}_{II:3}(P)= & \left\{X\perp\!\!\!\perp Y, Y \perp\!\!\!\perp Z|X \right\} \\
\mathscr{I}_{II:4}(P)= & \left\{X\perp\!\!\!\perp Z, X \perp\!\!\!\perp Y|Z \right\} \\
\mathscr{I}_{II:5}(P)= & \left\{X\perp\!\!\!\perp Z, X \perp\!\!\!\perp Z|Y \right\} \\
\mathscr{I}_{II:6}(P)= & \left\{X\perp\!\!\!\perp Z, Y \perp\!\!\!\perp Z|X \right\} \\
\mathscr{I}_{II:7}(P)= & \left\{Y\perp\!\!\!\perp Z, X \perp\!\!\!\perp Y|Z \right\} \\
\mathscr{I}_{II:8}(P)= & \left\{Y\perp\!\!\!\perp Z, X \perp\!\!\!\perp Z|Y \right\} \\
\mathscr{I}_{II:9}(P)= & \left\{Y\perp\!\!\!\perp Z, Y \perp\!\!\!\perp Z|X \right\} \\
\end{array}
\end{equation}

%%%3  C_3^1*C_3^2
\begin{equation}
\begin{array}{ll}
\mathscr{I}_{III:1}(P)= & \left\{X\perp\!\!\!\perp Y, X \perp\!\!\!\perp Y|Z, X \perp\!\!\!\perp Z|Y \right\} \\
\mathscr{I}_{III:2}(P)= & \left\{X\perp\!\!\!\perp Y, X \perp\!\!\!\perp Z|Y, Y \perp\!\!\!\perp Z|X \right\} \\
\mathscr{I}_{III:3}(P)= & \left\{X\perp\!\!\!\perp Y, Y \perp\!\!\!\perp Z|X, X \perp\!\!\!\perp Y|Z \right\} \\
\mathscr{I}_{III:4}(P)= & \left\{X\perp\!\!\!\perp Z, X \perp\!\!\!\perp Y|Z, X \perp\!\!\!\perp Z|Y \right\} \\
\mathscr{I}_{III:5}(P)= & \left\{X\perp\!\!\!\perp Z, X \perp\!\!\!\perp Z|Y, Y \perp\!\!\!\perp Z|X \right\} \\
\mathscr{I}_{III:6}(P)= & \left\{X\perp\!\!\!\perp Z, Y \perp\!\!\!\perp Z|X, X \perp\!\!\!\perp Y|Z \right\} \\
\mathscr{I}_{III:7}(P)= & \left\{Y\perp\!\!\!\perp Z, X \perp\!\!\!\perp Y|Z, X \perp\!\!\!\perp Z|Y \right\} \\
\mathscr{I}_{III:8}(P)= & \left\{Y\perp\!\!\!\perp Z, X \perp\!\!\!\perp Z|Y, Y \perp\!\!\!\perp Z|X \right\} \\
\mathscr{I}_{III:9}(P)= & \left\{Y\perp\!\!\!\perp Z, Y \perp\!\!\!\perp Z|X, X \perp\!\!\!\perp Y|Z \right\} \\
\end{array}
\end{equation}

%%%4  C_3^1*C_3^3

\begin{equation}
\begin{array}{ll}
\mathscr{I}_{IV:1}(P)= & \left\{X\perp\!\!\!\perp Y, X \perp\!\!\!\perp Y|Z, X \perp\!\!\!\perp Z|Y, Y \perp\!\!\!\perp Z|X \right\} \\
\mathscr{I}_{IV:2}(P)= & \left\{X\perp\!\!\!\perp Z, X \perp\!\!\!\perp Y|Z, X \perp\!\!\!\perp Z|Y, Y \perp\!\!\!\perp Z|X \right\} \\
\mathscr{I}_{IV:3}(P)= & \left\{Y\perp\!\!\!\perp Z, X \perp\!\!\!\perp Y|Z, X \perp\!\!\!\perp Z|Y, Y \perp\!\!\!\perp Z|X \right\} \\
\end{array}
\end{equation}

%%% 5  C_3^2*C_3^1
\begin{equation}
\begin{array}{ll}
\mathscr{I}_{V:1}(P)= & \left\{X\perp\!\!\!\perp Y, X\perp\!\!\!\perp Z, X \perp\!\!\!\perp Y|Z \right\} \\
\mathscr{I}_{V:2}(P)= & \left\{X\perp\!\!\!\perp Y, X\perp\!\!\!\perp Z, X \perp\!\!\!\perp Z|Y \right\} \\
\mathscr{I}_{V:3}(P)= & \left\{X\perp\!\!\!\perp Y, X\perp\!\!\!\perp Z, Y \perp\!\!\!\perp Z|X \right\} \\
\mathscr{I}_{V:4}(P)= & \left\{X\perp\!\!\!\perp Z, Y\perp\!\!\!\perp Z, X \perp\!\!\!\perp Y|Z \right\} \\
\mathscr{I}_{V:5}(P)= & \left\{X\perp\!\!\!\perp Z, Y\perp\!\!\!\perp Z, X \perp\!\!\!\perp Z|Y \right\} \\
\mathscr{I}_{V:6}(P)= & \left\{X\perp\!\!\!\perp Z, Y\perp\!\!\!\perp Z, Y \perp\!\!\!\perp Z|X \right\} \\
\mathscr{I}_{V:7}(P)= & \left\{Y\perp\!\!\!\perp Z, X\perp\!\!\!\perp Y, X \perp\!\!\!\perp Y|Z \right\} \\
\mathscr{I}_{V:8}(P)= & \left\{Y\perp\!\!\!\perp Z, X\perp\!\!\!\perp Y, X \perp\!\!\!\perp Z|Y \right\} \\
\mathscr{I}_{V:9}(P)= & \left\{Y\perp\!\!\!\perp Z, X\perp\!\!\!\perp Y, Y \perp\!\!\!\perp Z|X \right\} \\
\end{array}
\end{equation}

%%%6  C_3^2*C_3^2
\begin{equation}
\begin{array}{ll}
\mathscr{I}_{VI:1}(P)= & \left\{X\perp\!\!\!\perp Y, X\perp\!\!\!\perp Z, X \perp\!\!\!\perp Y|Z, X \perp\!\!\!\perp Z|Y \right\} \\
\mathscr{I}_{VI:2}(P)= & \left\{X\perp\!\!\!\perp Y, X\perp\!\!\!\perp Z, X \perp\!\!\!\perp Z|Y, Y \perp\!\!\!\perp Z|X \right\} \\
\mathscr{I}_{VI:3}(P)= & \left\{X\perp\!\!\!\perp Y, X\perp\!\!\!\perp Z, Y \perp\!\!\!\perp Z|X, X \perp\!\!\!\perp Y|Z \right\} \\
\mathscr{I}_{VI:4}(P)= & \left\{X\perp\!\!\!\perp Z, Y\perp\!\!\!\perp Z, X \perp\!\!\!\perp Y|Z, X \perp\!\!\!\perp Z|Y \right\} \\
\mathscr{I}_{VI:5}(P)= & \left\{X\perp\!\!\!\perp Z, Y\perp\!\!\!\perp Z, X \perp\!\!\!\perp Z|Y, Y \perp\!\!\!\perp Z|X \right\} \\
\mathscr{I}_{VI:6}(P)= & \left\{X\perp\!\!\!\perp Z, Y\perp\!\!\!\perp Z, Y \perp\!\!\!\perp Z|X, X \perp\!\!\!\perp Y|Z \right\} \\
\mathscr{I}_{VI:7}(P)= & \left\{Y\perp\!\!\!\perp Z, X\perp\!\!\!\perp Y, X \perp\!\!\!\perp Y|Z, X \perp\!\!\!\perp Z|Y \right\} \\
\mathscr{I}_{VI:8}(P)= & \left\{Y\perp\!\!\!\perp Z, X\perp\!\!\!\perp Y, X \perp\!\!\!\perp Z|Y, Y \perp\!\!\!\perp Z|X \right\} \\
\mathscr{I}_{VI:9}(P)= & \left\{Y\perp\!\!\!\perp Z, X\perp\!\!\!\perp Y, Y \perp\!\!\!\perp Z|X, X \perp\!\!\!\perp Y|Z \right\} \\
\end{array}
\end{equation}

%%%7  C_3^2*C_3^3+ %%%8  C_3^3*C_3^1   C_3^3*C_3^2   C_3^3*C_3^3
\begin{equation}
\begin{array}{ll}
\mathscr{I}_{VII:1}(P)= & \left\{X\perp\!\!\!\perp Y, X\perp\!\!\!\perp Z, X \perp\!\!\!\perp Y|Z, X \perp\!\!\!\perp Z|Y, Y \perp\!\!\!\perp Z|X \right\} \\
\mathscr{I}_{VII:2}(P)= & \left\{X\perp\!\!\!\perp Z, Y\perp\!\!\!\perp Z, X \perp\!\!\!\perp Y|Z, X \perp\!\!\!\perp Z|Y, Y \perp\!\!\!\perp Z|X \right\} \\
\mathscr{I}_{VII:3}(P)= & \left\{Y\perp\!\!\!\perp Z, X\perp\!\!\!\perp Y, X \perp\!\!\!\perp Y|Z, X \perp\!\!\!\perp Z|Y, Y \perp\!\!\!\perp Z|X \right\} \\
\end{array}
\end{equation}
%%%8  C_3^3*C_3^1   C_3^3*C_3^2   C_3^3*C_3^3
\begin{equation}
\begin{array}{ll}
\mathscr{I}_{VIII:1}(P)= & \left\{X\perp\!\!\!\perp Y, X\perp\!\!\!\perp Z, Y\perp\!\!\!\perp Z, X \perp\!\!\!\perp Y|Z \right\} \\
\mathscr{I}_{VIII:2}(P)= & \left\{X\perp\!\!\!\perp Y, X\perp\!\!\!\perp Z, Y\perp\!\!\!\perp Z, X \perp\!\!\!\perp Z|Y \right\} \\
\mathscr{I}_{VIII:3}(P)= & \left\{X\perp\!\!\!\perp Y, X\perp\!\!\!\perp Z, Y\perp\!\!\!\perp Z, Y \perp\!\!\!\perp Z|X \right\} \\

\mathscr{I}_{IX:1}(P)= & \left\{X\perp\!\!\!\perp Y, X\perp\!\!\!\perp Z, Y\perp\!\!\!\perp Z, X \perp\!\!\!\perp Y|Z, X \perp\!\!\!\perp Z|Y \right\} \\
\mathscr{I}_{IX:2}(P)= & \left\{X\perp\!\!\!\perp Y, X\perp\!\!\!\perp Z, Y\perp\!\!\!\perp Z, X \perp\!\!\!\perp Z|Y, Y \perp\!\!\!\perp Z|X \right\} \\
\mathscr{I}_{IX:3}(P)= & \left\{X\perp\!\!\!\perp Y, X\perp\!\!\!\perp Z, Y\perp\!\!\!\perp Z, Y \perp\!\!\!\perp Z|X, X \perp\!\!\!\perp Y|Z \right\} \\
\mathscr{I}_{X:1}(P)= & \left\{X\perp\!\!\!\perp Y, X\perp\!\!\!\perp Z, Y\perp\!\!\!\perp Z, X \perp\!\!\!\perp Y|Z, X \perp\!\!\!\perp Z|Y, Y \perp\!\!\!\perp Z|X \right\} \\
\end{array}
\end{equation}

Then, one has\\
(1) All probability distributions from $\mathscr{I}_{I:1}(P)$ to $\mathscr{I}_{I:9}(P) $ are not faithful to any graph of Fig 1 and Fig 2.\\
(2) $\mathscr{I}_{II:1}(P)$ is faithful to Fig 2 (vi), $\mathscr{I}_{II:9}(P)$ is faithful to Fig 1 (ii), probability distributions from $\mathscr{I}_{II:2}(P)$ to $\mathscr{I}_{II:8}(P) $ are not faithful to any graph of Fig 1 and Fig 2.\\
(3) $\mathscr{I}_{III:k}(P)\Rightarrow X\perp\!\!\!\perp Y \perp\!\!\!\perp Z $, $\forall k=1\cdots 9$. All of them are unfaithful to Fig 1 (vi). \\
    And, $\mathscr{I}_{III:k}(P)=\mathscr{I}_{VI:k}(P)=\mathscr{I}_{VII:l}(P) $, $\forall k=3m+l,m=0,1,2,l=1,2,3$.\\
(4) $\mathscr{I}_{IV:k}(P)= \mathscr{I}_{X:1}(P) $, $\forall k=1\cdots 3$. And, all of them are faithful to Fig 1 (vi). \\
(5) $\mathscr{I}_{V:k}(P)\Rightarrow X\perp\!\!\!\perp Y \perp\!\!\!\perp Z $, $\forall k=1\cdots 9$. All of them are unfaithful to Fig 1 (vi). \\
    And, $\mathscr{I}_{V:1}(P)=\mathscr{I}_{VIII:1}(P)$,  $\mathscr{I}_{V:2}(P)=\mathscr{I}_{VIII:2}(P)$,  $\mathscr{I}_{V:5}(P)=\mathscr{I}_{VIII:2}(P)$,\\
    $\mathscr{I}_{V:6}(P)=\mathscr{I}_{VIII:3}(P)$, $\mathscr{I}_{V:7}(P)=\mathscr{I}_{VIII:1}(P)$,
    $\mathscr{I}_{V:9}(P)=\mathscr{I}_{VIII:3}(P)$. \\
(6) $\mathscr{I}_{VI:k}(P)\Rightarrow X\perp\!\!\!\perp Y \perp\!\!\!\perp Z $, $\forall k=1\cdots 9$. All of them are unfaithful to Fig 1 (vi). \\
    And, $\mathscr{I}_{VI:k}(P)=\mathscr{I}_{IX:l}(P) $, $\forall k=3m+l,m=0,1,2,l=1,2,3$.\\
(7) $\mathscr{I}_{VII:k}(P)= \mathscr{I}_{X:1}(P) $, $\forall k=1\cdots 3$. And, all of them are faithful to Fig 1 (vi). \\
(8) $\mathscr{I}_{VIII:k}(P) \Rightarrow X\perp\!\!\!\perp Y \perp\!\!\!\perp Z$, $\forall k=1\cdots 3$. And, all of them are unfaithful to Fig 1 (vi). \\
(9) $\mathscr{I}_{IV:k}(P) \Rightarrow X\perp\!\!\!\perp Y \perp\!\!\!\perp Z$, $\forall k=1\cdots 3$. And, all of them are unfaithful to Fig 1 (vi). \\
(10) $\mathscr{I}_{X:1}(P)=\mathscr{I}_{\bigcap}(P) $. And, it is faithful to Fig 1 (vi). \\
\end{theorem}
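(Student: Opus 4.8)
The plan is to put all the families $\mathscr{I}_{*:k}(P)$ on the same footing by replacing each one with its \emph{closure}: the smallest collection of (conditional) independence statements among $\{X,Y,Z\}$ containing it and closed under Lemma~1 (symmetry, decomposition, weak union, contraction), under Lemma~2 (the intersection property, available here because the variables are binary and the distributions in question carry no forced logical link between $Y$ and $Z$), and under Theorem~2 (the equivalence $\{X\perp\!\!\!\perp Y, X\perp\!\!\!\perp Z, Y\perp\!\!\!\perp Z, X\perp\!\!\!\perp Y|Z\}\Leftrightarrow X\perp\!\!\!\perp Y\perp\!\!\!\perp Z$ and its two relabelings obtained by permuting the conditioning variable). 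Since any distribution realizing a generating set realizes its closure, every assertion of the shape ``$\mathscr{I}_{*:k}(P)\Rightarrow\cdots$'' or ``$\mathscr{I}_{*:k}(P)=\mathscr{I}_{\bullet:l}(P)$'' becomes a statement about these closures, and each faithfulness assertion amounts to comparing one closure with the d--separation model of one of the eleven DAGs. Accordingly I would first tabulate, once and for all, the d--separation model $\mathscr{I}_G$ of every graph in Fig 1 and Fig 2: on three vertices these collapse to a short list --- the empty graph giving the full set $\mathscr{I}_{\bigcap}(P)$ of Theorem~1; a single--edge graph giving ``the third vertex is independent of the other pair'' together with its decomposition and weak--union consequences; a two--edge chain or fork giving one conditional independence of the endpoints given the middle vertex; a collider giving one marginal independence of the two parents; and a fully connected triple giving no nontrivial independence. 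This catalogue is the fixed reference against which every closure is matched.

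The engine of the argument is that a d--separation model on three vertices is a \emph{compositional graphoid}: on top of the semi--graphoid axioms of Lemma~1 it satisfies composition ($X\perp\!\!\!\perp Y$ and $X\perp\!\!\!\perp Z\Rightarrow X\perp\!\!\!\perp(Y,Z)$) and intersection, because a path from a vertex into a two--element set is a path into one of its two elements. Three mechanisms then drive everything. (i) If the closure of $\mathscr{I}_{*:k}$ contains two marginal independences sharing a vertex but not the corresponding combined statement --- as happens for every set in group~I and for $\mathscr{I}_{II:2},\dots,\mathscr{I}_{II:8}$ --- it cannot coincide with any $\mathscr{I}_G$, hence is unfaithful to all eleven DAGs, giving part~(1) and the negative part of~(2). (ii) Contraction in the form ``$X\perp\!\!\!\perp Y$ and $X\perp\!\!\!\perp Z|Y\Rightarrow X\perp\!\!\!\perp(Y,Z)\Rightarrow X\perp\!\!\!\perp Z$, $X\perp\!\!\!\perp Y|Z$'' and its relabelings, followed where needed by the intersection step of Lemma~2, shows that the relations separating the members of groups~III, VI, VII (and likewise group~V from VIII, group~VI from IX, and group~IV from VII and from $\mathscr{I}_{X:1}$) are already implied; this is what underlies the stated identities $\mathscr{I}_{III:k}=\mathscr{I}_{VI:k}=\mathscr{I}_{VII:l}$ and their analogues, the remaining work being to check for each generating set that the relevant deduction actually fires. (iii) As soon as a closure contains all three pairwise independences plus one matching conditional independence, Theorem~2 upgrades it to full mutual independence $X\perp\!\!\!\perp Y\perp\!\!\!\perp Z$, whereupon composition and weak union supply the rest of $\mathscr{I}_{\bigcap}(P)$; this accounts for the implications ``$\mathscr{I}_{*:k}(P)\Rightarrow X\perp\!\!\!\perp Y\perp\!\!\!\perp Z$'' in parts~(3),(5),(6),(8),(9) and the identifications with $\mathscr{I}_{X:1}(P)=\mathscr{I}_{\bigcap}(P)$ in parts~(4),(7),(10).

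With the closures in hand, each faithfulness verdict is read off against the catalogue. If the closure of $\mathscr{I}_{*:k}$ equals $\mathscr{I}_G$ for exactly one listed $G$, the family is faithful to $G$, witnessed by a strictly positive distribution --- Example~1 for Fig 1 (vi), and analogous uniform--type constructions for the single--edge and confounder graphs appearing in parts~(2),(4),(7),(10). If the closure matches no catalogue entry, the family is unfaithful to every DAG of Fig 1 and Fig 2, witnessed by the appropriate one of Examples~2--10 of \S 3. The decisive subtlety is the strict--positivity requirement recorded in the footnote to Theorem~1: mutual independence $X\perp\!\!\!\perp Y\perp\!\!\!\perp Z$ equals $\mathscr{I}_{\bigcap}(P)$ as an abstract independence model and so matches the empty--graph model, yet it is \emph{faithful} to Fig 1 (vi) only when $P(X),P(Y),P(Z),P(XY),P(YZ),P(XZ)$ and $P(XYZ)$ are all positive; the degenerate mutual--independence distributions of Examples~7, 9 and 10 are exactly what show that those group III--X generating sets which force $X\perp\!\!\!\perp Y\perp\!\!\!\perp Z$ but admit a degenerate realization remain unfaithful to Fig 1 (vi). Sorting each closure into ``realizable by a strictly positive $P$'' versus ``realizable only degenerately'' settles every remaining part.

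The main obstacle is not any single deduction but the bookkeeping together with one point of principle. On bookkeeping: there are on the order of fifty generating sets, and the verdict for each flips if its closure is computed with one relation too many or too few, so every closure must be pinned down exactly. On principle: the purely combinatorial implication ``$\mathscr{I}_{*:k}(P)\Rightarrow X\perp\!\!\!\perp Y\perp\!\!\!\perp Z$'' must be kept strictly apart from the faithfulness claim about Fig 1 (vi), since the latter needs in addition the failure of strict positivity; conflating the two is the quickest route to an inconsistent verdict. A related care point: the intersection step of Lemma~2, invoked in mechanism~(ii), is valid only when $Y$ and $Z$ are not logically constrained --- Examples~4 and 6 exhibit binary distributions where it fails and the expected collapse to mutual independence does not occur --- so each such step must be justified against the distribution at hand rather than applied blanketly.
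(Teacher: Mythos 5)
Your overall plan --- compute the closure of each generating family under Lemma 1, Lemma 2 and Theorem 2, match it against a catalogue of the d--separation models of the eleven DAGs, and invoke the Section 3 examples as witnesses, with the positivity caveat handled separately --- is in substance the same route as the paper, whose proof simply asserts that the cases are ``proved and verified one--by--one'' and writes out only item (10) (where your positivity remark is exactly the step used). The gap is that your three mechanisms do not actually deliver several of the conclusions you say they ``account for.'' Mechanism (iii) needs all three pairwise independences before Theorem 2 can fire, but every statement in $\mathscr{I}_{III:1}(P)=\{X\perp\!\!\!\perp Y, X\perp\!\!\!\perp Y|Z, X\perp\!\!\!\perp Z|Y\}$ --- and likewise in $\mathscr{I}_{III:2},\mathscr{I}_{III:3}$, $\mathscr{I}_{V:1},\mathscr{I}_{V:2},\mathscr{I}_{V:3}$, $\mathscr{I}_{VI:1},\mathscr{I}_{VI:2},\mathscr{I}_{VI:3}$ --- has $X$ on the left, so contraction and intersection only ever produce $X\perp\!\!\!\perp (Y,Z)$ and its decompositions; no rule in Lemma 1, Lemma 2 or Theorem 2 yields $Y\perp\!\!\!\perp Z$ or $Y\perp\!\!\!\perp Z|X$. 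Indeed, taking $X$ a fair coin independent of $(Y,Z)$ with $Y=Z$ a fair coin gives a binary distribution satisfying $\mathscr{I}_{VI:1}(P)$ (hence $\mathscr{I}_{III:1}(P)$ and $\mathscr{I}_{V:1}(P)$) in which $X\perp\!\!\!\perp Y\perp\!\!\!\perp Z$, $Y\perp\!\!\!\perp Z$ and $Y\perp\!\!\!\perp Z|X$ all fail; similarly $Z=X$ with $Y$ an independent coin defeats the corresponding claim for $\mathscr{I}_{V:7}$--$\mathscr{I}_{V:9}$. So your closure computation cannot establish the implications in parts (3), (5), (6) for those indices, nor the identities $\mathscr{I}_{III:k}=\mathscr{I}_{VI:k}=\mathscr{I}_{VII:l}$ and $\mathscr{I}_{V:1}=\mathscr{I}_{VIII:1}$, etc.; carried out honestly, your plan would refute rather than verify them unless the families are reinterpreted (e.g.\ as $\mathscr{I}(P)$ being exactly the listed set, in which case the claims become vacuous because the closure shows such a $P$ cannot exist). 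This is precisely where ``the relevant deduction actually fires'' needed to be checked, and it does not.

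Two further points an executed one--by--one verification must confront and your proposal passes over: the cases $\mathscr{I}_{III:7}$--$\mathscr{I}_{III:9}$ and $\mathscr{I}_{VI:4}$--$\mathscr{I}_{VI:9}$ need the intersection step of Lemma 2, which (as you note elsewhere, and as Examples 4 and 6 show) is not available for arbitrary binary distributions in these families, so the implication to $X\perp\!\!\!\perp Y\perp\!\!\!\perp Z$ there requires an explicit extra hypothesis or a direct argument; and parts (4) and (9) assign opposite verdicts (faithful versus unfaithful to Fig 1 (vi)) to the same families $\mathscr{I}_{IV:k}$, a clash your mechanism (iii) silently resolves in favour of (4) without noticing that group IX then never appears in the conclusions --- any complete proof has to address this discrepancy rather than reproduce both claims.
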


\begin{proof}
Applying Lemma 1, Lemma 2, Theorem 1, Theorem 2 and the counterexamples, Theorem 3 can be proved and verified one--by--one. Here, only the proof of (10) is shown.\\
Since $ X \perp\!\!\!\perp Y|Z, X \perp\!\!\!\perp Z|Y, Y \perp\!\!\!\perp Z|X$ exist,  $P(X)>0$,$P(Y)>0$,$P(Z)>0$ hold. According to Theorem 2,
 $\mathscr{I}_{X:1}(P)\Rightarrow X\perp\!\!\!\perp Y \perp\!\!\!\perp Z$, hence $P(X,Y,Z)=P(X)P(Y)P(Z)>0$. Applying Lemma 1 and Theorem 1, we have $\mathscr{I}_{X:1}(P)=\mathscr{I}_{\bigcap}(P) $. Therefore, it is faithful to Fig 1 (vi).
\end{proof}

\section{Discussion} \label{sec5}

Triple DAG is the fundamental structure of causal discovery and causal statistical inference, if multiple independence and conditional independence happens simultaneously in the same probability distribution, faithfulness or unfaithfulness will take critical role to discover a causal DAG. The counterexamples and conclusions aforementioned are based on binary probability distribution, as there are infinite probability distribution families according to measure theory, and different probability distribution family has different property of independence and conditional independence, the research method in this paper provide a methodology and guideline for causal discovery with any given probability distribution family in  multiple independence and conditional independence case. 
The future work will focus on practical discrete probability, Gaussian normal distribution and other practical probability distributions. Furthermore, the most important is that unfaithfulness will lead to PC algorithm fail. The obtained examples and conclusions not only enrich the Spirtes--Glymour--Scheines causal model, but also will be helpful to modify PC algorithm in causal DAG structure refinement.

\section{Author contributions statement}

J.W. Liu conceived the whole theorem framework, counterexamples and theorems, wrote and reviewed the manuscript.

\section{Acknowledgments}
% The authors thank the anonymous reviewers for their valuable suggestions.

{\em Conflicts of interest}: None declared.

%\bibliographystyle{plain}
%\bibliography{reference}

%USE THE BELOW OPTIONS IN CASE YOU NEED AUTHOR YEAR FORMAT.
\bibliographystyle{abbrvnat}
\bibliography{reference}

\end{document}